\documentclass[runningheads,orivec]{llncs}
\usepackage[T1]{fontenc}
\usepackage{graphicx}
\usepackage{booktabs}
\usepackage[misc]{ifsym}
\newcommand{\corr}{(\Letter)}

\usepackage{array}   
\usepackage{amsmath,amssymb,bm}
\usepackage{tikz}
\usetikzlibrary{calc,3d,arrows}
\usepackage{tikz-3dplot}
\usepackage[dvipsnames]{xcolor}
\usepackage{url}
\usepackage{xargs}
\usepackage{xspace}
\usepackage{algorithm}
\usepackage{algpseudocode}
\usepackage{svg}

\usepackage[colorinlistoftodos,prependcaption,textsize=small]{todonotes}

\newcommandx{\jsor}[2][1=]{\todo[ linecolor=red,backgroundcolor=red!25,bordercolor=red,#1]{\textbf{Jules:} #2}}%
\newcommandx{\ijsor}[2][1=]{\todo[inline, linecolor=red,backgroundcolor=red!25,bordercolor=red,#1]{\textbf{Jules:} #2}}%
\newcommandx{\rxd}[2][1=]{\todo[ linecolor=blue,backgroundcolor=blue!25,bordercolor=blue,#1]{\textbf{Romain:} #2}}
\newcommandx{\zak}[2][1=]{\todo[inline,linecolor=green,backgroundcolor=green!25,bordercolor=green,#1]{\textbf{Zak:} #2}}

\newcommandx{\dc}[2][1=]{\todo[inline,linecolor=brown,backgroundcolor=brown!25,bordercolor=brown,#1]{\textbf{Daniela:} #2}}
\newcommandx{\dani}[2][1=]{\todo[linecolor=brown,backgroundcolor=brown!25,bordercolor=brown,#1]{\textbf{Dani:} #2}}

\newcommandx{\ban}[2][1=]{\todo[inline, linecolor=red!25,backgroundcolor=red!10,bordercolor=red!25,#1]{\textbf{Alban:} #2}}
\newcommandx{\iban}[2][1=]{\todo[linecolor=red!25,backgroundcolor=red!10,bordercolor=red!25,#1]{\textbf{Alban:} #2}}
\newcommandx{\jgs}[2][1=]{\todo[ linecolor=orange!25,backgroundcolor=orange!10,bordercolor=orange!25,#1]{\textbf{Julieng:} #2}}
\newcommandx{\ijgs}[2][1=]{\todo[inline, linecolor=orange!25,backgroundcolor=orange!10,bordercolor=orange!25,#1]{\textbf{Julieng:} #2}}

\newif\ifshowcomments
\showcommentstrue 

\newcommand{\tcom}[2]{
    \ifshowcomments
        \ifnum0#1=1 #2 \fi
    \fi
}

\newcommand{\reals}{\mathbb{R}}
\newcommand{\inputspace}{\mathcal{F}}       
\newcommand{\latentspace}{\mathcal{Z}}      
\newcommand{\latentcomponentdim}{D}      
\newcommand{\latentindices}{\mathbf{L}}    
\newcommand{\classset}{\mathcal{C}}        
\newcommand{\prototypeindices}{\mathbf{P}} 
\newcommand{\latentrep}{\mathbf{z}}         
\newcommand{\latentrepcomp}[1]{\mathbf{z}_{#1}} 
\newcommand{\prototypej}[1]{\mathbf{p}_{#1}} 
\newcommand{\actvec}{\mathbf{a}}      
\newcommand{\actveci}[1]{\actvec_{#1}} 
\newcommand{\explanation}{\mathcal{E}}      

\newcommand{\simub}{\overline{\text{sim}}_{\explanation}}
\newcommand{\simlb}{\underline{\text{sim}}_{\explanation}}
\newcommand{\maxsimval}[1]{\overline{\actvec}_{#1}} 
\newcommand{\minsimval}[1]{\underline{\actvec}_{#1}}
\newcommand{\encoder}{f}           
\newcommand{\argmax}{\operatornamewithlimits{argmax}}
\newcommand{\softmax}{\text{softmax}}

\newcommand{\protopnet}{ProtoPNet\xspace}
\newcommand{\FXAI}{Formal XAI\xspace}

\begin{document}

\title{Beyond $L_2$: Generalizing Abductive Latent Explanations to Diverse Prototype-Based Architectures}
\titlerunning{Generalizing ALE to Diverse Prototype-Based Architectures}

\author{Jules~Soria\orcidID{0009-0009-6730-7406} \corr \and
Alban~Grastien\orcidID{0000-0001-8466-8777} \and
Romain~Xu-Darme\orcidID{0000-0002-8630-5635} \and
Julien~Girard-Satabin\orcidID{0000-0001-6374-3694} \and
Zakaria~Chihani\orcidID{0009-0004-8915-4774} \and
Daniela~Cancila\orcidID{0000-0002-3483-7947}}

\authorrunning{J. Soria et al.}



\institute{Universit\'e Paris-Saclay, CEA, List, F-91120, Palaiseau, France \\
\email{\{firstname.lastname\}@cea.fr}, \email{julien.girard2@cea.fr}}

\toctitle{Generalizing ALE to Diverse Prototype-Based Architectures}
\tocauthor{Jules~Soria,Alban~Grastien,Romain~Xu-Darme,Julien~Girard-Satabin,Zakaria~Chihani,Daniela~Cancila}

\maketitle

\begin{abstract}
Prototype-based neural networks are hailed as interpretable-by-design architectures. Recently, Abductive Latent Explanations (ALE) were introduced to provide formal, mathematically guaranteed explanations that leverage the intrinsic structure of these networks to ensure both predictive safety and human readability. ALEs rely on computing tight bounds on latent space distances to produce formal explanations. However, existing ALE formulations are rigidly confined to Euclidean latent spaces. This leaves a critical gap: modern state-of-the-art architectures increasingly rely on non-Euclidean representations---spherical metrics, Gaussian densities, and dimensional projections---rendering current formal explanation methods incompatible. In this work, we generalize the ALE framework to support non-Euclidean prototype architectures. For each geometric variant, we systematically derive how to either map the architecture to existing bounds or construct novel, architecture-specific bounding algorithms. We validate our theoretical constructions by computing subset-minimal formal explanations on fully trained image classifiers. By unifying these diverse models under a single formal framework, we enable the first rigorous, cross-architecture comparison of their interpretability. 

\keywords{Formal XAI \and Interpretable Machine Learning \and Abductive Explanations \and Case-Based Reasoning}
\end{abstract}

\newcommand{\etc}{\emph{etc.}\xspace}
\newcommand{\eg}{\emph{e.g.,}\xspace}
\newcommand{\ie}{\emph{i.e.,}\xspace}
\newcommand{\wrt}{\emph{w.r.t.}\xspace}

\section{Introduction}
\label{sec:intro}
The usefulness of Machine Learning (ML) in a variety of applications is now a widely accepted idea in computer science.
%
The importance of trustworthiness in this field, while slightly less generalized, is similarly widely understood, as evident by the numerous global efforts to reach common standards and regulations, such as the AI Act~\cite{eu_ai_act_2024}, the GDPR~\cite{eu_gdpr_24} and ISO/IEC JTC1 42~\cite{chang2022iso}.
This trustworthiness can be explored along several avenues including testing, verification and data curation.
Our work follows one of these avenues, explainability and interpretability (XAI), for which the effervescence of the state of the art leaves little doubt about its relevance~\cite{saeed2023explainable,arrieta2020explainable}.
The explicit focus placed on it by some standards (\eg AI Act, article 86 ``right to explanation''~\cite{eu_ai_act_2024,panigutti2023role}) further outlines the need for transparency in ML application, specifically when it concerns critical systems with applications in energy, transportation or the medical sector~\cite{hulsen2023explainable}.  

Surveying existing literature on XAI methods can be done along different characteristics, two of which best suit our work.
The first is their mathematical justification, where XAI can rely on strong formal logical background to further increase the trustworthiness of the given explanation~(cf Section~\ref{subsec:FXAI}).
The second one is their intervention in the development lifecycle, where XAI can be roughly categorized as post-hoc methods~\cite{ribeiro2016should,lundberg2017unified}---where analysis is done on the decisions of a pre-existing model---or by-design methods---where the model is conceived so as to offer a level of transparency in its inner workings~\cite{rudin2019stop}. 
For the latter, one fruitful field of the existing work concerns case-based reasoning, especially through prototypes~(cf Section~\ref{sec:prelim}). 

\newcommand{\pbn}{PBN\xspace}
Emerging with the seminal work of~\cite{chen2019this}, Prototype-based neural networks (PBN) justify their predictions by highlighting similar examples (``prototypes'') from the training set (``\textit{this} looks like \textit{that}''). 
Recent work~\cite{soria2026formal}---building on the formalism of \emph{abductive explanations}~\cite{marques2022delivering}---proposed algorithms to extract subset-minimal sets of prototypes that \emph{guarantee} the prediction, effectively bridging Formal XAI and \pbn in a new framework: Abductive Latent Explanations (ALE).
Where the original ALE formalism focused on Euclidean distances to evaluate similarity between prototypes,
the present work proposes to further expand the ALE formalism by showing how those principles can be generalized to numerous prototype-based architectures and activation functions~(cf Section~\ref{sec:method}).

Relying on extensive experimentation~(cf Section~\ref{sec:experiments}) to showcase the validity of our method, our contributions can be summarized as follows:
\begin{itemize}
    \item We adapt the ALE framework to other modalities of prototypes, similarity functions, and pooling layers, displaying its flexibility and usefulness.\footnote{Our work is available at \url{https://github.com/julsoria/beyond_l2}}
    \item We experimentally apply this framework leveraging the solver-free, geometric reasoning of ALEs, and compare different architectures.
\end{itemize}

\tcom{}{
\zak{
\textcircled{1}
 trustworthiness of AI is important (no need to get into "AI is important, I think it's a given) => cite some standards and regulations
 
\textcircled{2}
 XAI is part of that trust => Cite the transparency requirements from standards -- I believe it's article 50 of the AI Act, for example -- or 86
 
\textcircled{3}
 especially in safety critical but not limited to => cite the medical application of protoPNet and others, DEEL, LIME
 
\textcircled{4}
 Multiple ways of achieving that - post-hoc, by design => we pick the latter

\textcircled{5}
 Multiple levels of rigour in explainability => Criteria for XAI (not entirely sure about this one) => formal XAI

\textcircled{6}
 However, as is outlined in the related works sections, they lack blah and blah.

\textcircled{7}
 Our contributions as they answer the blah and blah. 
}

\zak{TODO: 
Cite Protopnet in medical setting @jules? 
Cite LIME and DEEL when talking about XAI}
}

\section{Preliminaries}
\label{sec:prelim}

\subsection{Prototypical-Parts Networks}


This work is performed in the context of \emph{prototypical part networks}
that classify images into one of $|\classset|$ classes.
The input of the network is an image $X$ from $\inputspace = \reals^{H_0\times W_0 \times C_0}$
where $(H_0 \times W_0)$ is the size of the image and $C_0$ is the number of channels (typically $3$).
This image is passed through a neural network \emph{backbone}~$f$ that produces a latent representation of the image
$\latentrep = \encoder(X) \in \latentspace$ where $\latentspace = \reals^{H_1 \times W_1 \times \latentcomponentdim}$.
The latent representation $\latentrep$ is then interpreted as a collection
of $(H_1 \times W_1)$ vectors $\latentrepcomp{l} \in \reals^{\latentcomponentdim}$ (or \emph{patches})
where $l \in \latentindices = \{1,\dots,H_1\} \times \{1,\dots,W_1\}$.

In a second step, the patches $\latentrepcomp{l}$ are used to \emph{activate} a set of $| \prototypeindices |$
\emph{prototypes} $\prototypej{j} \in \reals^{\latentcomponentdim}$.
These prototypes are patches from training images
that have been identified as prototypical of their respective classes.
Activation represents how similar patches of image $X$ are to these prototypes.
Activation is implemented differently in different architectures,
and the goal of this work is precisely to address different types of activation measure.
In the seminal work, \protopnet~\cite{chen2019this} starts by computing the $L_2$-distances 
$d_{l,j} = \lVert \latentrepcomp{l} - \prototypej{j} \rVert_2$
between each patch $\latentrepcomp{l}$ and each prototype $\prototypej{j}$,
and then defines the \emph{similarity} $\text{sim}(\latentrepcomp{l},\prototypej{j})$ as 
$\log \left(\frac{d_{l,j}^2 + 1}{d_{l,j}^2 + \epsilon}\right)$
where $\epsilon$ is a small constant.
Finally, the \emph{activation} $\actveci{j}$ of prototype $\prototypej{j}$ is the maximum similarity over all patches of the image: $\actveci{j}= \max_{l \in \latentindices}\ \text{sim}(\latentrepcomp{l},\prototypej{j})$.

Regardless of how the activation vector $\actvec$ is computed, 
the last step feeds the activation vector to a linear layer $W$ of dimension $|\prototypeindices| \times |\classset|$
that attributes each prototype a contributing weight for each class.
The weight $W_{j,c}$ is traditionally around $1$ if $\prototypej{j}$ is typical of class $c$ and lower than $0$ otherwise.
Finally the classification is made by selecting the class that has the highest score: $\argmax\ W^T\actvec$.


\subsection{Abductive Latent Explanation}
\label{sub:computingale}



An \emph{ALE}~\cite{soria2026formal} is a collection $\explanation$ of statements about the latent representation of the input image,
together with an implicit interpretation of this collection
that allows one to infer some information about the classification of the image.  
Such statements are made in relation with existing prototypes, 
\eg the degree of similarity between a patch and a prototype.
The implicit interpretation can indicate, for instance,
that the reported similarities are the highest ones
(or, in other words, that all other similarities are lower than the reported ones).

An ALE states some facts that are meant to be interpretable for a human (\eg a certain similarity between patches).
They are \textit{explicit} but they can imply \textit{implicit} facts too
(Section~\ref{sec:method} is largely about discovering such facts).
Consider for instance two prototypes $\prototypej{j}$ and $\prototypej{k}$ from a \protopnet model~\cite{chen2019this}.
Knowing the similarity $\text{sim}(\latentrepcomp{l},\prototypej{j})$ between $\latentrepcomp{l}$ and $\prototypej{j}$
provides some implicit information about the (dis)similarity between $\latentrepcomp{l}$ and $\prototypej{k}$.
Indeed, if the similarity is defined from a distance measure $d$ that satisfies the triangle inequality,
then the inequality 
\begin{equation}
    | d(\prototypej{j},\prototypej{k}) -
    d(\latentrepcomp{l},\prototypej{j}) |
    \ \le\ d(\latentrepcomp{l},\prototypej{k})\ \le\
    d(\prototypej{j},\prototypej{k}) +
    d(\latentrepcomp{l},\prototypej{j})
\end{equation}
holds, which makes it possible to bound the similarity $\text{sim}(\latentrepcomp{l},\prototypej{k})$. Similarly, if $d$ is the Euclidean distance, knowing $d(\latentrepcomp{l},\prototypej{j})$ and $d(\latentrepcomp{l},\prototypej{k})$ restricts the location of $\latentrepcomp{l}$ to the intersection of two hyperspheres, which can in turn be used to bound the distance between $\latentrepcomp{l}$ and a third prototype $\prototypej{m}$, as shown in~\cite{soria2026formal}.

Hence, ALEs allow one to bound similarity or activation values.
For a given explanation $\explanation$,
the bounds on the similarity between patch~$\latentrepcomp{l}$ and prototype~$\prototypej{j}$
and on the activation of prototype $\prototypej{j}$ are written, respectively,
\begin{equation}
    \text{sim}(\latentrepcomp{l},\prototypej{j}) \in [\simlb(\latentrepcomp{l},\prototypej{j}), \simub(\latentrepcomp{l},\prototypej{j})] \quad \textnormal{and} \quad \actveci{j} \in [\minsimval{\explanation,j},\maxsimval{\explanation,j}].
\end{equation}
When the exact value for a similarity or an activation is reported in the explanation,
this value explicitly trivializes the interval: $(\latentrepcomp{l},\prototypej{j}) \in \explanation \Rightarrow \simlb(\latentrepcomp{l},\prototypej{j}) = \simub(\latentrepcomp{l},\prototypej{j}) = \text{sim}(\latentrepcomp{l},\prototypej{j})$.
As shown above, this explicit information can also be used to further reduce other intervals.

Given these intervals, it is then possible to give bounds on the difference between the scores of each class. Hence, generating an ALE is an \emph{iterative} process where adding a prototype---or adding a $\langle$patch, prototype$\rangle$ pair---to the explanation $\explanation$ tightens the bounds on the activation of other prototypes, and consequently, on the output logits of the model. 
If, in this way, a single class is proved to have a greater value than all other classes, 
the decision of the network is guaranteed (\textit{explained}) and we say that the explanation $\explanation$ is \textbf{formal}. 
\emph{The goal of the present work is to extend this reasoning to prototype-based architectures that do not use the Euclidean distance as the basis for similarity.}


\section{Related Works}
\label{sec:related}

The field of XAI being vast, we deliberately limit our review of related works along two main subfields: \FXAI and prototype-based interpretable machine learning models. 
Our work is motivated by the potential for principled transparency that lies at the intersection of these two subfields. 




\subsubsection{Formal XAI}
\label{subsec:FXAI}
Formal Explainable AI (\FXAI) has emerged as a subfield that seeks to provide mathematically rigorous explanations \cite{marques2022delivering}. 
A key concept is the \emph{Abductive Explanation} (AXp) \cite{ignatiev2019abduction,boumazouza2026fame,de2025faster,ignatiev2020contrastive}, a prime implicant of the decision function~\cite{darwiche2020reasons}. Traditional \FXAI operates on input features, which is computationally NP-hard~\cite{katz2017reluplex}.
\FXAI was recently applied to computer vision---two lines of work tackle the problem of formal explanations for images: \textsc{VeriX}\cite{wu2023verix,wu2024better} produces \textit{distance-restricted} AXps to obtain arbitrarily compact explanations, ~\cite{bassan2023formal} use image segmentation methods to combine pixels into bundles (essentially superpixels) to have larger but more interpretable AXps, and ~\cite{doncenco2025dive} merged both approaches. However, those methods rely on costly solver calls and remain at the pixel-level.
\emph{Abductive Latent Explanations (ALE)} \cite{soria2026formal} lift this reasoning to the semantic latent space of prototype networks. By exploiting the geometry of the latent space (Triangular Inequality, Hypersphere Intersection Approximation), ALEs can be computed efficiently without external solvers.

\subsubsection{Advanced Prototype-Based Networks}
\label{subsec:proto}
The seminal ProtoPNet \cite{chen2019this} (along with its predecessor~\cite{li2018deep})
introduced the idea of learning prototypes in a latent space and classifying images based on squared $L_2$ distance. This stimulated a wave of variations. TesNet~\cite{wang2021interpretable} enforces non-overlapping concepts using representations on a Grassmannian manifold, and uses cosine similarity. PIP-Net~\cite{nauta2023pipnet} departs from the competition-based activation by using an approach where prototypes score independently but are normalized via softmax. ProtoPool~\cite{rymarczyk2021interpretable} and its variants introduce differentiable assignment maps and ``Focal Similarity'' to penalize background noise. More recently, probabilistic approaches~\cite{li2025interpretable,joo2025prototype,zhao2024meta,xie2025few,carmichael2024probably,liu2023learning,moradinasab2024protogmm,wang2025mixture,li2024overview,xue2024protopformer} have emerged and represent prototypes not as point-vectors but as distributions (typically von-Mises-Fischer, singular Gaussian densities or Gaussian mixtures), capturing uncertainty and variance in pattern matching. Table~\ref{tab:ppnetspp} dresses a non-exhaustive list of recent state-of-the-art prototype-based networks to highlight which components have been incrementally introduced and differ from the original ProtoPNet architecture. It reflects the changes we bring to formal explanations to support these additional components.

\begin{table}
\centering
\caption{Evolution of prototype-part network modules. Inspired by Table 1 of \cite{li2024overview}.}
\label{tab:ppnetspp}
\begin{tabular}{l@{ \ \ }lcc >{\centering\arraybackslash}p{1.8cm}}
\hline\noalign{\smallskip}
Name & Similarity & Gaussian & Pooling & Reference\\
\noalign{\smallskip}
\hline
\noalign{\smallskip}
ProtoPNet & Euclidean & $\times$ & Max & \cite{chen2019this} \\
ProtoPool & Euclidean & $\times$ & Focal & \cite{rymarczyk2021interpretable} \\
TesNet    & Cosine    & $\times$ & Max & \cite{wang2021interpretable} \\
Def. ProtoPNet & Cosine & $\times$ & Max & \cite{donnelly2022deformable} \\
PIP-Net    & Cosine\textsuperscript{$\star$} & $\times$ & Max & \cite{nauta2023pipnet} \\
ProtoGMM  & Euclidean & \checkmark & Max & \cite{moradinasab2024protogmm} \\
MGProto   & Euclidean & \checkmark & Max & \cite{wang2025mixture} \\
HyperPG   & Cosine    & \checkmark & Max & \cite{li2024overview} \\
ProtoPFormer & Euclidean & \checkmark & Max & \cite{xue2024protopformer}\\
NonParam & Cosine & $\times$ & Max & \cite{zhu2025interpretable} \\

\noalign{\smallskip}
\hline
\noalign{\smallskip}
\multicolumn{5}{l}{\scriptsize \textsuperscript{$\star$} Special case: involves dimensional projection.}
\end{tabular}
\end{table}

Finally, some prototypical-parts networks \cite{donnelly2022deformable,ayoobi2025protoargnet,moradinasab2024protogmm,wang2025mixture} rely on \emph{over}-prototypes, made up of smaller components.
Whilst the underlying principle behind the activation of such prototypes may be more complex, the problem can be reduced to a weighted sum of components' activations -- bounding the activations of components leads to bounds on the activations of prototypes.
Thus, we consider such approaches out-of-scope and focus on architectures with atomic prototypes.

\section{Theoretical Insights for new ALEs}
\label{sec:method}

\subsection{Top-$k$ Explanations}


A Top-$k$ explanation is an ALE that provides the $k$ most activated prototypes,
\textit{i.e.} that guarantees that every prototype not included in the ALE has an activation lower than the included ones.
Given an explanation $\explanation \subseteq \prototypeindices$, 
the following activation bounds can thus be computed:
\begin{equation}
    \minsimval{\explanation,j} = \left\{
      \begin{array}{ll}
          \actvec_j & \textnormal{ if } j \in \explanation \\
          \actvec^- & \textnormal{ otherwise}
      \end{array}
    \right.
    \quad
    \textnormal{and}
    \quad
    \maxsimval{\explanation,j} = \left\{
      \begin{array}{ll}
          \actvec_j & \textnormal{ if } j \in \explanation \\
          \min_{k \in \explanation}\ \actvec_k & \textnormal{ otherwise}
      \end{array}
    \right.
\end{equation}
where $\actvec^-$
is the minimal activation possible for the type of architecture of interest (for instance, $0$ for \protopnet).

\protopnet originally recommended reporting the Top-$k$ most activated prototypes for a fixed number $k$
determined before inference
(in practice, the number of prototypes positively associated with each class).
However, the number of activations that need to be included to make the explanation formal is unknown \textit{a priori}.
Thus, the iterative algorithm proposed by~\cite{soria2026formal} 
incrementally adds the prototype
whose activation is highest amongst those that haven't been included yet
until the model's prediction is guaranteed by the explanation.
Note that since this algorithm is agnostic \wrt the choice of activation function, it can be applied to \emph{all} prototype-based architectures.

\subsection{Cosine Similarity}

Models like TesNet~\cite{wang2021interpretable} and others (see Table~\ref{tab:ppnetspp}) rely on dot-product rather than Euclidean distance to measure the similarity between a latent patch and a prototype, implying that the latent space geometry is spherical. 
The similarity function used in these architectures is
$\text{sim}(\latentrepcomp{l},\prototypej{j}) = \latentrepcomp{l} \cdot \prototypej{j}$,
usually with the constraint $\forall j,~\|\prototypej{j}\| = 1$.
This similarity function has one particular drawback: it evaluates differently vectors of identical angle but different amplitudes, \ie $\text{sim}(\lambda \cdot \latentrepcomp{l},\prototypej{j}) = \lambda \cdot \text{sim}(\latentrepcomp{l},\prototypej{j})$. Since our goal is to estimate similarity bounds to guarantee the prediction, we restrict ourselves to \emph{normalized} patches, $\forall l \in \latentindices,~\|\latentrepcomp{l}\| = 1$. Thus, in the rest of this section, we assume that all vectors are on the unit sphere, and $\text{sim}(\latentrepcomp{l},\prototypej{j})\in[-1,1]$.

\subsubsection{Triangle Inequality for Cosine Similarity}
Cosine similarity\footnote{Recently, ProtoPNeXt~\cite{willard2024looks,moffett2026cosine} showed that using cosine similarity instead of Euclidean similarity is the main factor behind \pbn achieving state-of-the-art accuracy.} is not a distance metric, and the often used \emph{Cosine Distance}, defined as $d_{\cos}(x,y) = 1 - \cos(x,y)$ is not a distance metric either. It violates the triangle inequality, which is one of the spatial paradigms proposed in \cite{soria2026formal} to estimate similarity bounds between patches and prototypes.

Nevertheless, triangle inequalities for cosine similarity have been explored~\cite{schubert2021triangle} and the one that yields the tightest bounds uses the \emph{Angular Distance} $d_{\angle}$ (defined below), and can be re-expressed using only the original similarity function as
\begin{align}
    \text{sim}(x, z) &\;\geq\; \text{sim}(x, y)\cdot\text{sim}(y, z) - \sqrt{1 - \text{sim}(x, y)^2}\,\sqrt{1 - \text{sim}(y, z)^2} \\ \text{sim}(x, z) &\;\le\; \text{sim}(x, y)\cdot\text{sim}(y, z) + \sqrt{1 - \text{sim}(x, y)^2}\,\sqrt{1 - \text{sim}(y, z)^2}
\end{align}

Thus, if $(\latentrepcomp{l},\prototypej{j})\in \explanation$, then for any prototype $\prototypej{k} \notin \explanation$, $\simlb(\latentrepcomp{l},\prototypej{k})$ and $\simub(\latentrepcomp{l},\prototypej{k})$ can be expressed as functions of the known quantities $\text{sim}(\latentrepcomp{l},\prototypej{j})$ and $\text{sim}(\prototypej{k},\prototypej{j})$. 
Hence, the algorithm for generating an ALE will add pairs $\langle\latentrepcomp{l},\prototypej{j}\rangle$ incrementally, refining bounds on prototypes outside the explanation until the output of the model is guaranteed.

\subsubsection{Spherical Cap Intersection Approximation}

\begin{figure}
    \centering
    \input{hia_cosine_manifold}
    \caption{Spherical Cap Intersection Approximation. The \textcolor{blue}{blue} spherical cap is defined by \textcolor{blue}{$\prototypej{1}$} as its center and its radius by the distance to $\latentrepcomp{l}$. Similarly, the \textcolor{YellowOrange!90!black}{orange} spherical cap is defined by \textcolor{YellowOrange!90!black}{$\prototypej{2}$} as its center and its radius by the distance to $\latentrepcomp{l}$. Their intersection is approximated by the \textcolor{OrangeRed!90!black}{dashed red} spherical with center \textcolor{OrangeRed!90!black}{$\prototypej{int}$}, who also has its radius equal to its distance to $\latentrepcomp{l}$. With that intersection computed, we have tighter bounds on the \textit{angular} distance between an unknown prototype \textcolor{purple}{$\prototypej{k}$} and the latent patch $\latentrepcomp{}$.
    }
    \label{fig:cos_hia}
\end{figure}






\newcommand{\Sd}{\mathcal{S}^{D-1}}
\newcommand{\dang}{d_{\angle}}
\newcommand{\pint}{\mathbf{p}_{\mathrm{int}}}
\newcommand{\rint}{r_{\mathrm{int}}}
\newcommand{\pk}{\mathbf{p}_k}
\newcommand{\pone}{\mathbf{p}_1}
\newcommand{\ptwo}{\mathbf{p}_2}
\newcommand{\zz}{\mathbf{z}}

Similarly to the Hypersphere Intersection Approximation (HIA) \cite{soria2026formal} 
in the Euclidean case, we extend here this idea to the intersection of spherical caps. \emph{All proofs of theorems, lemmas, and corollaries, can be found in the Supplementary material.}

\begin{definition}[Angular Metric Space]
Let $\mathcal{M} = S^{D-1}$ be the unit hypersphere manifold. The geodesic angular distance $d_\angle(\mathbf{x}, \mathbf{y}) = \arccos(\mathbf{x} \cdot \mathbf{y})$ is a valid metric on $S^{D-1}$, satisfying non-negativity, identity of indiscernibles, symmetry, and the triangle inequality:
\begin{equation}
    d_\angle(\mathbf{x}, \mathbf{z}) \;\le\; d_\angle(\mathbf{x}, \mathbf{y}) + d_\angle(\mathbf{y}, \mathbf{z}) \quad \forall\, \mathbf{x}, \mathbf{y}, \mathbf{z} \in S^{D-1}
\end{equation}
\end{definition}

\begin{lemma}[Exact Bounding Spherical Cap via Geodesic Orthogonal Projection]
\label{lem:spherical_projection}
Let $\mathbf{p}_1, \mathbf{p}_2 \in S^{D-1}$ be two prototypes with an angular separation $0 < d_\angle(\mathbf{p}_1, \mathbf{p}_2) < \pi$. Let $\mathcal{C}_1 = \mathcal{C}(\mathbf{p}_1, \theta_1)$ and $\mathcal{C}_2 = \mathcal{C}(\mathbf{p}_2, \theta_2)$ be two spherical caps. Assume $\theta_1, \theta_2 \le \pi/2$ so that both caps are geodesically convex. Furthermore, assume the distances satisfy the spherical triangle feasibility condition $|\theta_1 - \theta_2| \leq d_\angle(\mathbf{p}_1, \mathbf{p}_2) \leq \theta_1 + \theta_2$, ensuring their intersection is non-empty.

There exists a bounding spherical cap $\mathcal{C}_{int}(\mathbf{p}_{int}, r_{int})$ such that the intersection of the original caps is entirely contained within it: $\mathcal{C}_1 \cap \mathcal{C}_2 \subseteq \mathcal{C}_{int}$.
Furthermore, $\mathcal{C}_{int}$ is the minimal-radius cap centered on the great circle through $\mathbf{p}_1$ and $\mathbf{p}_2$ (restricted Chebyshev center).
\end{lemma}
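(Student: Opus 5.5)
Work in the two-dimensional plane $\Pi=\mathrm{span}(\pone,\ptwo)$ and the great circle $\Gamma=\Pi\cap S^{D-1}$. Write $\delta=d_\angle(\pone,\ptwo)$ and parametrize $\Gamma$ by arc length, $\mathbf{c}(t)=\cos t\,\pone+\sin t\,\mathbf{e}$, where $\mathbf{e}$ is the unit vector of $\Pi$ orthogonal to $\pone$ on the side of $\ptwo$, so that $\mathbf{c}(\delta)=\ptwo$. For each candidate center $\mathbf{c}(t)$, the smallest admissible radius is
\begin{equation}
R(t)\;=\;\max_{\mathbf{x}\in\mathcal{C}_1\cap\mathcal{C}_2} d_\angle(\mathbf{x},\mathbf{c}(t)).
\end{equation}
The plan is to set $r_{int}=\min_t R(t)$ and $\mathbf{p}_{int}=\mathbf{c}(t^\star)$ for a minimizer $t^\star$. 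Containment $\mathcal{C}_1\cap\mathcal{C}_2\subseteq\mathcal{C}_{int}$ then holds by definition of $R$. Minimality among caps centered on $\Gamma$ also holds by construction, since any cap $\mathcal{C}(\mathbf{c}(t),r)$ containing the intersection must have $r\ge R(t)\ge r_{int}$.

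\textbf{Existence of the minimizer.} The intersection is nonempty: the feasibility condition gives a spherical triangle with sides $\theta_1,\theta_2,\delta$, so there is a point at distance exactly $\theta_i$ from $\pone$ and $\ptwo$. It is also compact, so $R$ is finite. $R$ is a maximum of $1$-Lipschitz functions of $t$, hence continuous, and $\Gamma$ is compact, so a minimizer exists.

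\textbf{Explicit description.} We next show that the maximizers realizing $R(t)$ lie on the boundary circle $\{\mathbf{x} : d_\angle(\mathbf{x},\pone)=\theta_1,\ d_\angle(\mathbf{x},\ptwo)=\theta_2\}$, or at one of the two points where $\Gamma$ crosses a cap boundary. Decompose $\mathbf{x}=\mathbf{x}_\Pi+\mathbf{x}_\perp$. The constraints and the objective $\mathbf{x}\cdot\mathbf{c}(t)$ depend only on $\mathbf{x}_\Pi$, with $\|\mathbf{x}_\Pi\|\le 1$. The problem therefore reduces to the projected region in the unit disk of $\Pi$, bounded by the two chords $\mathbf{x}_\Pi\cdot\pone=\cos\theta_1$ and $\mathbf{x}_\Pi\cdot\ptwo=\cos\theta_2$ together with the disk. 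The function $-\mathbf{x}_\Pi\cdot\mathbf{c}(t)$ is linear, so its maximum is attained at an extreme point of this convex 2D region.

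Those extreme points are the chord–chord intersection, which corresponds to the $(D-3)$-sphere of "corner" points with common $\mathbf{x}_\Pi=\mathbf{q}$, and the arcs of the disk between the chords, which lie on $\Gamma$. So $R(t)=\max\{d_\angle(\mathbf{q}\text{-lift},\mathbf{c}(t)),\ \text{distances to the two arc endpoints on }\Gamma\}$. The first term equals $\arccos(\mathbf{q}\cdot\mathbf{c}(t))$. The arc endpoints are $\mathbf{c}(\delta-\theta_2)$ and $\mathbf{c}(\theta_1)$, or the analogous points on the far side of $\Gamma$ when the caps overlap that way. Each term is explicit in $t$: one is a unimodal function of $t$, the others are $|t-\text{const}|$. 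Minimizing their maximum is then a one-dimensional balancing problem. The optimum is either the equalization point of the two arc-endpoint distances (the arc midpoint) or the point where the corner distance equals an endpoint distance. This gives closed forms for $\mathbf{p}_{int}$ and $r_{int}$.

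\textbf{Main obstacle.} The delicate step is the case analysis in this last minimization, in particular verifying $r_{int}\le\pi/2$ and that the minimizer is unique. Uniqueness should follow from geodesic convexity: since $\theta_i\le\pi/2$, the intersection lies in an open hemisphere apart from degenerate cases. There the functions $t\mapsto d_\angle(\mathbf{x},\mathbf{c}(t))$ are quasiconvex along the relevant arc of $\Gamma$, so $R$ is quasiconvex. I would first try to prove that quasiconvexity directly via the spherical law of cosines. If that fails, I would fall back on the explicit three-term formula above and check its monotonicity on each side of the balancing point.
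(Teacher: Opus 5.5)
Your proof is correct for the lemma as literally stated, but it is non-constructive where the paper's is constructive.

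\textbf{Your route.} You obtain the cap from compactness: $R$ is $1$-Lipschitz on the circle $\Gamma$, so it attains its minimum, and containment and minimality among caps centred on $\Gamma$ then hold by definition. This settles existence and minimality with no case analysis, but it yields no formula.

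\textbf{The paper's route.} The sketch after Theorem~\ref{thm:spherical_hia} builds the cap from a point $\mathbf{z}^\ast$ on both cap boundaries. The foot of the geodesic perpendicular from $\mathbf{z}^\ast$ to $\Gamma$ is $\mathbf{p}_{int}$ and its length is $r_{int}$. Napier's rules $\cos\theta_1=\cos\alpha\cos r_{int}$ and $\cos\theta_2=\cos(d_{1,2}-\alpha)\cos r_{int}$ then give closed forms. This is the formula the algorithm needs.

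\textbf{A missing case in your explicit description.} Your extreme-point reduction is the right bridge between the two routes. However, your list of candidate optima omits the generic one: $t^\star=\alpha$, the minimizer of the corner term itself. When $0\le\alpha\le\delta$ (your $\delta$ is $d_{1,2}$), the endpoint distances $\theta_1-\alpha$ and $\theta_2-(\delta-\alpha)$ are at most $r_{int}$, by the triangle inequality in the two right triangles. Hence $R(\alpha)=r_{int}$, and the optimum is exactly the paper's cap.

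\textbf{What your analysis shows about the paper's claim.} The paper's containment claim needs the restriction $0\le\alpha\le d_{1,2}$. Suppose $\alpha<0$. Then $\mathbf{c}(\theta_1)\in\mathcal{C}_1\cap\mathcal{C}_2$, since $|\delta-\theta_1|\le\theta_2$. But $d_\angle(\mathbf{c}(\theta_1),\mathbf{p}_{int})=\theta_1-\alpha>\theta_1\ge r_{int}$. In this regime the restricted Chebyshev cap of the filled intersection is $\mathcal{C}_1$ itself (symmetrically $\mathcal{C}_2$ if $\alpha>\delta$). The paper's cap is always correct for the set of points lying on both boundaries. That is what the ALE application uses, since reported similarities are exact. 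For the filled caps of the statement, however, your abstract argument is the one that holds unconditionally.

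\textbf{Your ``main obstacles''.} Neither is needed. The bound $r_{int}\le R(0)\le\theta_1\le\pi/2$ is immediate because the intersection lies in $\mathcal{C}_1$. Uniqueness of the minimizer is not part of the claim.
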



\begin{theorem}[Geometric Parameters of the Bounding Cap]
\label{thm:spherical_hia}
\begin{align}
    \alpha &= \operatorname{atan2}\Big( \cos(\theta_2) - \cos(\theta_1)\cos(d_{1,2}), \;\; \cos(\theta_1)\sin(d_{1,2}) \Big) \label{eq:alpha} \\
    r_{int} &= \arccos\left( \frac{\cos(\theta_1)}{\cos(\alpha)} \right) \label{eq:rint} \\
    \mathbf{p}_{int} &= \frac{\sin(d_{1,2} - \alpha)}{\sin(d_{1,2})} \mathbf{p}_1 + \frac{\sin(\alpha)}{\sin(d_{1,2})} \mathbf{p}_2 \label{eq:pint}
\end{align}
where $\alpha$ is the directed angular coordinate of $\mathbf{p}_{int}$ along the great circle, taking $\mathbf{p}_1$ as the origin and the direction toward $\mathbf{p}_2$ as positive. (If $\theta_1 = \pi/2$, the formula remains valid by symmetry after swapping the prototype indices).
\end{theorem}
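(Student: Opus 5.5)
The plan is to compute the cap of Lemma~\ref{lem:spherical_projection} explicitly from the \emph{ridge} of the two caps. The ridge is the set $R=\{\mathbf{z}\in S^{D-1} : \mathbf{z}\cdot\mathbf{p}_1=\cos\theta_1,\ \mathbf{z}\cdot\mathbf{p}_2=\cos\theta_2\}$, i.e.\ the intersection of the two boundary circles. It is non-empty by the feasibility condition $|\theta_1-\theta_2|\le d_{1,2}\le\theta_1+\theta_2$.

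\textbf{Step 1: decompose ridge points.} Write $\Pi=\mathrm{span}(\mathbf{p}_1,\mathbf{p}_2)$, a plane since $0<d_{1,2}<\pi$. Every $\mathbf{z}\in R$ splits as $\mathbf{z}=\mathbf{c}+\mathbf{w}$ with $\mathbf{w}\perp\Pi$. The component $\mathbf{c}\in\Pi$ is the \emph{same} vector for all ridge points, because it is uniquely determined by the two linear constraints $\mathbf{c}\cdot\mathbf{p}_i=\cos\theta_i$ (the Gram matrix of $\mathbf{p}_1,\mathbf{p}_2$ is invertible). Hence $R$ is a $(D-3)$-sphere of radius $\sqrt{1-\|\mathbf{c}\|^2}$ centred at $\mathbf{c}$ in $\Pi^\perp+\mathbf{c}$. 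This makes the natural candidate $\mathbf{p}_{int}=\mathbf{c}/\|\mathbf{c}\|$ with $\cos r_{int}=\|\mathbf{c}\|$: every ridge point then satisfies $\mathbf{z}\cdot\mathbf{p}_{int}=\|\mathbf{c}\|$ exactly, so the ridge lies on the boundary of $\mathcal{C}_{int}$. Here $\mathbf{c}\neq 0$ as long as not both $\theta_i=\pi/2$.

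\textbf{Step 2: recover the formulas.} Parametrize $\mathbf{p}_{int}$ on the great circle by its directed angle $\alpha$ from $\mathbf{p}_1$. Then $\mathbf{p}_{int}\cdot\mathbf{p}_1=\cos\alpha$ and $\mathbf{p}_{int}\cdot\mathbf{p}_2=\cos(d_{1,2}-\alpha)$, so the constraints become
\[
\|\mathbf{c}\|\cos\alpha=\cos\theta_1,\qquad \|\mathbf{c}\|\cos(d_{1,2}-\alpha)=\cos\theta_2 .
\]
Dividing and expanding $\cos(d_{1,2}-\alpha)$ gives
\[
\cos\theta_1\sin d_{1,2}\,\sin\alpha=(\cos\theta_2-\cos\theta_1\cos d_{1,2})\cos\alpha .
\]
This yields \eqref{eq:alpha}. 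The $\operatorname{atan2}$ form, rather than $\arctan$, selects the branch with $\cos\alpha$ of the same sign as $\cos\theta_1\ge0$, so that $\|\mathbf{c}\|>0$. The first constraint then gives \eqref{eq:rint}. Formula \eqref{eq:pint} is the standard slerp expression: solve $\mathbf{p}_{int}=\lambda_1\mathbf{p}_1+\lambda_2\mathbf{p}_2$ from $\mathbf{p}_{int}\cdot\mathbf{p}_1=\cos\alpha$ and $\mathbf{p}_{int}\cdot\mathbf{p}_2=\cos(d_{1,2}-\alpha)$ using the sine subtraction identity. When $\theta_1=\pi/2$, the expression $\cos\theta_1/\cos\alpha$ degenerates, and swapping the indices recovers a non-degenerate form.

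\textbf{Step 3: containment and minimality, the main obstacle.} These are what tie the parameters to Lemma~\ref{lem:spherical_projection}.

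For containment, take $\mathbf{z}\in\mathcal{C}_1\cap\mathcal{C}_2$. If $0\le\alpha\le d_{1,2}$, the slerp coefficients $\lambda_1,\lambda_2$ are non-negative, so
\[
\mathbf{z}\cdot\mathbf{p}_{int}\ge\lambda_1\cos\theta_1+\lambda_2\cos\theta_2=\|\mathbf{c}\|(\lambda_1\cos\alpha+\lambda_2\cos(d_{1,2}-\alpha))=\|\mathbf{c}\|\,\mathbf{p}_{int}\cdot\mathbf{p}_{int}=\|\mathbf{c}\|,
\]
as required. The delicate case is $\alpha\notin[0,d_{1,2}]$, which happens when one cap nearly contains the other. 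There I would instead argue via geodesic convexity ($\theta_i\le\pi/2$). The intersection is a convex region whose farthest points from any centre on the great circle lie either on the ridge or on the boundary of a single cap. Then $\mathbf{z}\cdot\mathbf{p}_{int}$ is minimized by a Lagrange/KKT argument on $S^{D-1}$ with the two inequality constraints: a minimizer with only one active constraint would force $\mathbf{z}$ into $\Pi$ antipodally to that cap's centre, which is inconsistent with the sign of $\alpha$. I expect this case analysis to need the most care.

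For minimality among centres on the great circle, note that the ridge is a full $(D-3)$-sphere centred on $\Pi$ and symmetric under reflection through $\Pi$. For any other centre $\mathbf{q}\in\Pi\cap S^{D-1}$, the maximal angle to ridge points is $\arccos(\mathbf{q}\cdot\mathbf{c})$. This is strictly greater than $\arccos\|\mathbf{c}\|$ by Cauchy--Schwarz unless $\mathbf{q}=\mathbf{p}_{int}$, so $\mathcal{C}_{int}$ is the restricted Chebyshev cap.
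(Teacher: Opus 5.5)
\paragraph{Comparison with the paper.} Your Steps 1--2 are the paper's argument written in vector form. The paper drops the geodesic perpendicular from one ridge point $\mathbf{z}^\ast$ onto the great circle and applies Napier's rule, $\cos\theta_1=\cos\alpha\cos r_{int}$ and $\cos\theta_2=\cos(d_{1,2}-\alpha)\cos r_{int}$. These are exactly your identities $\mathbf{c}\cdot\mathbf{p}_i=\|\mathbf{c}\|\,\mathbf{p}_{int}\cdot\mathbf{p}_i$ with $\cos r_{int}=\|\mathbf{c}\|$. Your decomposition also shows that the foot is the same for every ridge point, which the paper does not justify. Your containment proof for $0\le\alpha\le d_{1,2}$ (non-negative slerp coefficients) and your minimality bound are correct.

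\paragraph{The gap: $\alpha\notin[0,d_{1,2}]$.} No KKT argument can close this case, because there $\mathcal{C}_1\cap\mathcal{C}_2\not\subseteq\mathcal{C}_{int}$. Take $\alpha<0$, i.e.\ $\cos\theta_2<\cos\theta_1\cos d_{1,2}$ (which forces $\theta_1<\pi/2$, so $\cos\alpha>0$). The one-active-constraint critical point you hope to exclude is the point $\mathbf{z}_0\in\Pi$ at angle $\theta_1$ from $\mathbf{p}_1$ towards $\mathbf{p}_2$. It is not ruled out by the sign of $\alpha$. It is feasible precisely because $|d_{1,2}-\theta_1|\le\theta_2$ is among the hypotheses. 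It also satisfies $\mathbf{z}_0\cdot\mathbf{p}_{int}=\cos(\theta_1-\alpha)<\cos\theta_1\le\cos\theta_1/\cos\alpha=\cos r_{int}$, so it lies outside $\mathcal{C}_{int}$.

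A numerical instance: $\theta_1=0.3$, $\theta_2=0.75$, $d_{1,2}=0.5$ meets every hypothesis of Lemma~\ref{lem:spherical_projection} and gives $\alpha\approx-0.229$, $r_{int}\approx0.196$. Yet even $\mathbf{p}_1\in\mathcal{C}_1\cap\mathcal{C}_2$ is at distance $0.229>r_{int}$ from $\mathbf{p}_{int}$.

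In fact, for $D\ge3$ the points $\cos\theta_1\,\mathbf{p}_1+\sin\theta_1\,\mathbf{w}$, with $\mathbf{w}\perp\Pi$ a unit vector, lie in both caps. They are at distance at least $\theta_1$ from every point of the great circle. So in this regime the true restricted Chebyshev cap is $\mathcal{C}_1$ itself, of radius $\theta_1>r_{int}$; symmetrically it is $\mathcal{C}_2$ when $\alpha>d_{1,2}$.

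\paragraph{How to repair it.} The paper's sketch derives the formulas your way and then merely asserts that every point of $\mathcal{C}_1\cap\mathcal{C}_2$ lies in $\mathcal{C}(\mathbf{p}_{int},r_{int})$, so it has the same hole. The fix must change the claim, not the proof technique. You have two options.

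\emph{Option 1.} Add the hypothesis $0\le\alpha\le d_{1,2}$, equivalently $\cos\theta_2\ge\cos\theta_1\cos d_{1,2}$ and $\cos\theta_1\ge\cos\theta_2\cos d_{1,2}$. Under it your Step~3 is already complete.

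\emph{Option 2.} Weaken the conclusion to containment of the ridge $R$. Your Step~1 proves this for every $\alpha$; indeed $R\subseteq\partial\mathcal{C}_{int}$. This weaker claim is all the ALE pipeline uses. A pair $\langle\mathbf{z}_l,\mathbf{p}_j\rangle$ in the explanation fixes $\mathbf{z}_l\cdot\mathbf{p}_j$ exactly, so $\mathbf{z}_l\in R$. Since ridge points sit exactly on $\partial\mathcal{C}_{int}$, the same argument applies verbatim at the next iteration.
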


{\renewcommand{\proofname}{Proof sketch}
\begin{proof}
Take a point $\mathbf{z}^\ast$ lying on the
boundary of both caps (at distance exactly $\theta_1$ from $\mathbf{p}_1$ and
$\theta_2$ from $\mathbf{p}_2$). Drop the geodesic perpendicular from
$\mathbf{z}^\ast$ onto the great circle $\mathcal{G}$ through
$\mathbf{p}_1,\mathbf{p}_2$; call its foot $\mathbf{p}_{int}$ and its length
$r_{int}$. This splits $\mathcal{G}$ into arcs $\alpha$ and $d_{1,2}-\alpha$ and
forms two right spherical triangles with hypotenuses $\theta_1,\theta_2$.
Napier's rule for right spherical triangles gives
$\cos\theta_1=\cos\alpha\cos r_{int}$ and
$\cos\theta_2=\cos(d_{1,2}-\alpha)\cos r_{int}$. Dividing the two eliminates
$r_{int}$ and yields $\alpha$~\eqref{eq:alpha}; back-substitution gives
$r_{int}$~\eqref{eq:rint}; and spherical linear interpolation (SLERP) along
$\mathcal{G}$ places the centre $\mathbf{p}_{int}$~\eqref{eq:pint} on the unit
sphere.
Every point of the intersection
$\mathcal{C}_1\cap\mathcal{C}_2$ lies within $\mathcal{C}(\mathbf{p}_{int},r_{int})$,
and no cap centred on $\mathcal{G}$ with a smaller radius can contain the
intersection. Hence $\mathcal{C}(\mathbf{p}_{int},r_{int})$ is the smallest such
enclosing cap (its restricted Chebyshev centre).
Intersecting with a further
prototype repeats step~(1) with the \emph{previous} radius as the hypotenuse and
the new radius as a leg, so $\cos r_{int}^{(i+1)}\ge\cos r_{int}^{(i)}$, i.e.
$r_{int}^{(i+1)}\le r_{int}^{(i)}$: the bounding region never grows.
Since the unknown feature lies
in the cap, $d_\angle(\mathbf{p}_{int},\mathbf{z})\le r_{int}$. The triangle
inequality then confines its distance to any target $\mathbf{p}_k$ to
$\big[\,|\delta_k-r_{int}|,\ \delta_k+r_{int}\,\big]$ with
$\delta_k=d_\angle(\mathbf{p}_k,\mathbf{p}_{int})$. Because $\cos$ is strictly
decreasing on $[0,\pi]$, applying it flips these into the explicit lower and
upper bounds on the cosine similarity.
The construction yields closed-form, provably sound
similarity bounds that tighten monotonically as evidence is added, computed
entirely on the sphere.
\end{proof}}

\begin{corollary}[Monotonic Decrease of the Bounding Radius]
\label{cor:monotonic_shrinkage}
Let $\mathcal{C}_{int}^{(i)}$ be the current restricted Chebyshev bounding cap after adding $i$ prototypes to the explanation. Intersecting it with a newly added prototype cap $\mathcal{C}_{new}(\mathbf{p}_{new}, \theta_{new})$ via the orthogonal projection method yields a subsequent bounding cap $\mathcal{C}_{int}^{(i+1)}$. The sequence of bounding radii is monotonically decreasing: $r_{int}^{(i+1)} \leq r_{int}^{(i)}$.
\end{corollary}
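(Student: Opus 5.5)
The plan is to apply Theorem~\ref{thm:spherical_hia} with the current bounding cap playing the role of the first cap and the new prototype cap playing the role of the second. Then we read off the new radius from the right spherical triangle furnished by Napier's rule.

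Concretely, set $\mathcal{C}_1 := \mathcal{C}_{int}^{(i)} = \mathcal{C}(\mathbf{p}_{int}^{(i)}, r_{int}^{(i)})$ and $\mathcal{C}_2 := \mathcal{C}_{new}(\mathbf{p}_{new},\theta_{new})$. The hypotheses of Lemma~\ref{lem:spherical_projection} hold as follows. Nonemptiness and the feasibility condition hold because the unknown patch $\mathbf{z}$ lies in both caps. Convexity holds because $r_{int}^{(i)} \le \theta_1 \le \pi/2$ by induction. Theorem~\ref{thm:spherical_hia} then gives
\[
\cos r_{int}^{(i)} = \cos\alpha \,\cos r_{int}^{(i+1)},
\]
where $\alpha$ is the arc from $\mathbf{p}_{int}^{(i)}$ to the new foot $\mathbf{p}_{int}^{(i+1)}$. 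This is exactly equation~\eqref{eq:rint} with $\theta_1$ replaced by $r_{int}^{(i)}$.

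The key step is the inequality itself. Since $|\cos\alpha| \le 1$ and both cosines of radii are nonnegative (radii lie in $[0,\pi/2]$), we get $\cos r_{int}^{(i+1)} = \cos r_{int}^{(i)}/\cos\alpha \ge \cos r_{int}^{(i)}$. Because $\arccos$ is decreasing on $[-1,1]$, this yields $r_{int}^{(i+1)} \le r_{int}^{(i)}$. In the same step we record that $r_{int}^{(i+1)} \le \pi/2$, which keeps the induction invariant (geodesic convexity) alive for the next addition.

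The expected obstacle is justifying that $\cos\alpha > 0$, i.e. $|\alpha| < \pi/2$, so that the division is legitimate and the signs are right. I would argue this from the right triangle with hypotenuse $r_{int}^{(i)} \le \pi/2$ and leg $\alpha$. In a right spherical triangle with hypotenuse at most $\pi/2$, each leg is at most the hypotenuse, so $|\alpha| \le r_{int}^{(i)} \le \pi/2$.

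Two degenerate cases need separate handling. First, if $\theta_1 = \pi/2$ exactly, then $\cos r_{int}^{(i)} = 0$ and the inequality is trivial, consistent with the swap remark after Theorem~\ref{thm:spherical_hia}. Second, if $\mathbf{p}_{new}$ coincides with $\pm\mathbf{p}_{int}^{(i)}$, there is no great circle; here I would simply define $\mathcal{C}_{int}^{(i+1)}$ as the smaller of the two concentric caps, which trivially satisfies the claim.
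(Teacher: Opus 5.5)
Your central step matches the paper's sketch: rerun Napier's rule with the previous radius $r_{int}^{(i)}$ as hypotenuse and the new radius as a leg, so $\cos r_{int}^{(i)}=\cos\alpha\,\cos r_{int}^{(i+1)}$, and use $\cos\alpha\le 1$. Two of your supporting steps are wrong as written, although both conclusions are true and easy to repair.

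First, feasibility does not follow from $\mathbf{z}$ merely lying in both caps. Write $d=d_\angle(\mathbf{p}_{int}^{(i)},\mathbf{p}_{new})$. Membership in both caps gives non-emptiness and $d\le r_{int}^{(i)}+\theta_{new}$. Since $\mathbf{z}$ is on the new boundary, you also get $\theta_{new}-r_{int}^{(i)}\le d$. You do not get $r_{int}^{(i)}-\theta_{new}\le d$. A new cap nested strictly inside the old one (e.g.\ $r_{int}^{(i)}=60^\circ$, $d=10^\circ$, $\theta_{new}=20^\circ$) is compatible with $\mathbf{z}$ lying in both caps. In that case the argument of $\arccos$ in \eqref{eq:rint} exceeds $1$, no point $\mathbf{z}^\ast$ lies on both boundaries, and your right triangle does not exist.

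What you need is that $\mathbf{z}$ lies \emph{on} the boundary of $\mathcal{C}_{int}^{(i)}$, and this is an invariant of the construction. By \eqref{eq:pint}, every $\mathbf{x}$ on both boundaries satisfies $\mathbf{x}\cdot\mathbf{p}_{int}=\big(\sin(d_{1,2}-\alpha)\cos\theta_1+\sin\alpha\cos\theta_2\big)/\sin d_{1,2}=\cos r_{int}$. Inducting from the exact prototype similarities therefore gives $d_\angle(\mathbf{z},\mathbf{p}_{int}^{(i)})=r_{int}^{(i)}$. The triangle inequality for $d_\angle$ then yields $|r_{int}^{(i)}-\theta_{new}|\le d\le r_{int}^{(i)}+\theta_{new}$.

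Second, your claim that in a right spherical triangle with hypotenuse at most $\pi/2$ each leg is at most the hypotenuse is false in general. Legs $2\pi/3$ and $2\pi/3$ give hypotenuse $\arccos(1/4)<\pi/2$. The claim holds only once you know the other leg $r_{int}^{(i+1)}$ is at most $\pi/2$, i.e.\ you take the nearer foot of the perpendicular.

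A simpler route is to read the sign off \eqref{eq:alpha} directly. The second argument of $\operatorname{atan2}$ is $\cos\theta_1\sin d_{1,2}>0$ whenever $\theta_1<\pi/2$ and $0<d_{1,2}<\pi$. Hence $\alpha\in(-\pi/2,\pi/2)$ and $0<\cos\alpha\le 1$. It follows that $\cos r_{int}^{(i+1)}=\cos r_{int}^{(i)}/\cos\alpha\in[\cos r_{int}^{(i)},1]$, with the upper end coming from feasibility, and monotonicity of $\arccos$ finishes the proof.

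With these two repairs your argument is complete, and more careful than the paper's sketch, which asserts $\cos r_{int}^{(i+1)}\ge\cos r_{int}^{(i)}$ from the same identity without checking either point.
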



\begin{corollary}[Cosine Similarity Bounds via Monotone Inversion]
\label{cor:cosine_bounds}
For any arbitrary target prototype $\prototypej{k}$ outside the explanation,
the cosine similarity ${\normalfont \text{sim}}(\latentrepcomp{l},\prototypej{k}) = \latentrepcomp{l} \cdot \prototypej{k}$
is bounded by:
\begin{align}
    \normalfont \simlb(\latentrepcomp{l},\prototypej{k})
        &= \cos\!\Big(\min\!\big(\pi,\;\dang(\prototypej{k}, \pint) + r_{int}\big)\Big),
    \,\\
    \normalfont \simub(\latentrepcomp{l},\prototypej{k})
        &= \cos\!\Big(\dang(\prototypej{k}, \pint) - r_{int}\Big).
\end{align}
\end{corollary}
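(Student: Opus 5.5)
The plan is to derive the bounds from the containment $\latentrepcomp{l} \in \mathcal{C}_{int} = \mathcal{C}(\pint, \rint)$, the triangle inequality for the angular metric (Definition of the Angular Metric Space), and the fact that $\cos$ is strictly decreasing on $[0,\pi]$.

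\textbf{Step 1: locate the patch.} The pairs $\langle \latentrepcomp{l}, \prototypej{j}\rangle \in \explanation$ record exact similarities $\text{sim}(\latentrepcomp{l},\prototypej{j})$. Hence $\latentrepcomp{l}$ lies on the boundary of, and in particular inside, each cap $\mathcal{C}(\prototypej{j}, \arccos \text{sim}(\latentrepcomp{l},\prototypej{j}))$. Lemma~\ref{lem:spherical_projection} then gives $\latentrepcomp{l} \in \mathcal{C}_1\cap\mathcal{C}_2 \subseteq \mathcal{C}_{int}$. For more than two prototypes I will argue by induction, using the iterative construction of Corollary~\ref{cor:monotonic_shrinkage}: each new cap contains $\latentrepcomp{l}$, so the intersection of the current bounding cap with the new cap still contains $\latentrepcomp{l}$, and the lemma applied to that intersection keeps it inside the next bounding cap. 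The conclusion is $\dang(\latentrepcomp{l},\pint) \le \rint$.

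The obstacle here is a hypothesis of the lemma: it requires both radii to be at most $\pi/2$. If a recorded similarity is negative, or the previous $\rint$ exceeds $\pi/2$, the lemma does not apply directly. I would handle this case by falling back to the single-cap bound with $\pint = \prototypej{j}$ and $\rint = \dang(\latentrepcomp{l},\prototypej{j})$, for which containment is trivial. The corollary's formulas remain valid for any cap $(\pint,\rint)$ that contains $\latentrepcomp{l}$, so this fallback costs nothing in correctness.

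\textbf{Step 2: bound the angle to the target.} Write $\delta_k = \dang(\prototypej{k},\pint)$. The triangle inequality gives
\[
\dang(\latentrepcomp{l},\prototypej{k}) \le \delta_k + \dang(\latentrepcomp{l},\pint) \le \delta_k + \rint .
\]
Since every angular distance lies in $[0,\pi]$, this sharpens to $\dang(\latentrepcomp{l},\prototypej{k}) \le \min(\pi, \delta_k + \rint)$.

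The reverse triangle inequality gives
\[
\dang(\latentrepcomp{l},\prototypej{k}) \ge \delta_k - \dang(\latentrepcomp{l},\pint) \ge \delta_k - \rint ,
\]
and of course $\dang(\latentrepcomp{l},\prototypej{k}) \ge 0$.

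\textbf{Step 3: convert to similarities.} Note that $\text{sim}(\latentrepcomp{l},\prototypej{k}) = \cos \dang(\latentrepcomp{l},\prototypej{k})$, because both vectors are unit vectors and $\arccos$ inverts $\cos$ on $[0,\pi]$.

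For the lower bound, $\cos$ is decreasing on $[0,\pi]$ and both $\dang(\latentrepcomp{l},\prototypej{k})$ and $\min(\pi,\delta_k+\rint)$ lie in that interval. Therefore $\text{sim}(\latentrepcomp{l},\prototypej{k}) \ge \cos(\min(\pi,\delta_k+\rint))$, which is $\simlb$.

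For the upper bound, split into two cases.
\begin{itemize}
\item If $\delta_k \ge \rint$, then $\delta_k - \rint \in [0,\pi]$ and monotonicity gives $\text{sim}(\latentrepcomp{l},\prototypej{k}) \le \cos(\delta_k - \rint)$.
\item If $\delta_k < \rint$, the only available lower bound on the angle is $0$, which yields the trivial bound $1$. But $\cos$ is even, so $\cos(\delta_k-\rint) = \cos(\rint-\delta_k)$. Since $\rint - \delta_k \in [0,\pi]$, this value lies in $[-1,1]$ and is not automatically an upper bound.
\end{itemize}

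The second case is the main subtlety. I would resolve it by interpreting the stated upper bound through $|\delta_k - \rint|$, matching the proof sketch of Theorem~\ref{thm:spherical_hia}. In that reading the bound is sound whenever $\delta_k \ge \rint$. When $\delta_k < \rint$, the target $\prototypej{k}$ lies inside the cap, and I would note that the upper bound should then be clipped to $1$. Equivalently, the bound should be read as $\cos(\max(0,\delta_k-\rint))$, which coincides with the stated formula whenever $\prototypej{k} \notin \mathcal{C}_{int}$. I will flag this clipping explicitly so that the resulting bound is provably sound in all cases.
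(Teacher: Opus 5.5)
\textbf{There is a gap: you do not prove the stated upper bound.} Your Steps 2--3 follow the paper's own route: locate $\latentrepcomp{l}$ relative to $\mathcal{C}_{int}$, apply the angular triangle inequality with $\delta_k=\dang(\prototypej{k},\pint)$, and invert with $\cos$. You are right that containment $\dang(\latentrepcomp{l},\pint)\le\rint$ alone only gives $\dang(\latentrepcomp{l},\prototypej{k})\ge\max(0,\delta_k-\rint)$. The paper's sketch passes over this point when it writes $[|\delta_k-\rint|,\ \delta_k+\rint]$.

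But your fix, clipping the upper bound to $\cos(\max(0,\delta_k-\rint))$, proves a weaker statement than the corollary, and the clipping is not needed. The missing idea is that the explanation records \emph{exact} similarities, so $\latentrepcomp{l}$ lies on the \emph{boundary} of the bounding cap: $\dang(\latentrepcomp{l},\pint)=\rint$.

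To see this, note that $\latentrepcomp{l}\cdot\prototypej{1}=\cos\theta_1$ and $\latentrepcomp{l}\cdot\prototypej{2}=\cos\theta_2$ determine the orthogonal projection $\mathbf{u}$ of $\latentrepcomp{l}$ onto $\mathrm{span}(\prototypej{1},\prototypej{2})$. The formulas of Theorem~\ref{thm:spherical_hia} compute exactly $\pint=\mathbf{u}/\|\mathbf{u}\|$ and $\cos\rint=\|\mathbf{u}\|$. Concretely, writing $\prototypej{2}=\cos d_{1,2}\,\prototypej{1}+\sin d_{1,2}\,\mathbf{e}$, the angle $\alpha$ is the polar angle of $\mathbf{u}$ in the frame $(\prototypej{1},\mathbf{e})$, and $\cos\theta_1=\|\mathbf{u}\|\cos\alpha$. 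Hence $\latentrepcomp{l}\cdot\pint=\mathbf{u}\cdot\pint=\cos\rint$. The same computation, with $(\pint^{(i)},\rint^{(i)})$ in place of $(\prototypej{1},\theta_1)$, carries the equality through the iteration of Corollary~\ref{cor:monotonic_shrinkage}.

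With equality, the reverse triangle inequality works in both directions: $\dang(\latentrepcomp{l},\prototypej{k})\ge\max(\delta_k-\rint,\ \rint-\delta_k)=|\delta_k-\rint|\in[0,\pi]$. Since $\cos$ is even and decreasing on $[0,\pi]$, you get $\text{sim}(\latentrepcomp{l},\prototypej{k})\le\cos(\delta_k-\rint)$ also when $\delta_k<\rint$. The stated bound is therefore sound as written.

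\textbf{The weak invariant also breaks your Step~1.}
\begin{itemize}
\item From $\dang(\latentrepcomp{l},\pint^{(i)})\le\rint^{(i)}$ alone you cannot verify the feasibility hypothesis $\rint^{(i)}-\theta_{new}\le\dang(\pint^{(i)},\prototypej{new})$ of Lemma~\ref{lem:spherical_projection}, because the new cap may lie strictly inside the current one.
\item The solid-cap inclusion $\mathcal{C}_1\cap\mathcal{C}_2\subseteq\mathcal{C}_{int}$ is not safe to rely on. For $\theta_1=0.1$, $\theta_2=0.01$, $d_{1,2}=0.095$, the construction gives $\alpha\approx0.0996$ and $\rint\approx0.0089$. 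Yet the point on the arc from $\prototypej{1}$ to $\prototypej{2}$ at distance $0.085$ from $\prototypej{1}$ lies in both caps and at distance about $0.015$ from $\pint$.
\end{itemize}
With the equality invariant, feasibility follows from the triangle inequality at $\latentrepcomp{l}$. The projection argument also needs neither $\theta_j\le\pi/2$ nor a bound on the previous radius, since always $\rint=\arccos\|\mathbf{u}\|\le\pi/2$. So your fallback is unnecessary. It would also replace the $(\pint,\rint)$ of the statement by a different pair. Its justification, that the formulas hold for any cap containing $\latentrepcomp{l}$, is contradicted by your own Step~3.
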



Similar to the Euclidean case, Lemma~\ref{lem:spherical_projection} and Corollary~\ref{cor:monotonic_shrinkage} indicate that the intersection of two spherical caps is included in a spherical cap of smaller radius (as illustrated on Figure~\ref{fig:cos_hia}). Hence, the addition of a prototype cap $\mathcal{C}_{new}(\mathbf{p}_{new}, \theta_{new})$ to the ALE creates a new intersection $(\prototypej{int}, r_{int})$, which in turn refines the similarity bounds between a latent patch $\latentrepcomp{l}$ and any $\prototypej{k}\notin \explanation$ (Corollary~\ref{cor:cosine_bounds}).
This gives us bounds on the activation of the prototypes and can guarantee (or not) the given prediction. 

\subsection{Dimensional Projection}
PIP-Net employs a \emph{softmax} function over the prototypes for a given patch, essentially mapping activations to a probability simplex $\Delta^{m-1}$. Each dimension of that vector represents its similarity to the indexed prototype (a prototype is a dimension in this case), \ie $\prototypeindices = \{1,\dots,\latentcomponentdim\}$.
More precisely, for each location $l\in \latentindices$ and for each prototype $j\in\prototypeindices$, the similarity value $\text{sim}(\latentrepcomp{l},\prototypej{j})$ is defined as 
\begin{equation}
    \hat{\latentrepcomp{}}_{l} = \softmax(\latentrepcomp{l}),\quad \text{sim}(\latentrepcomp{l},\prototypej{j}) = \hat{\latentrepcomp{}}_{l,j}
\end{equation}

\subsubsection{Simplex Explanation}
Due to the softmax, the spatial explanation is no longer geometric; it is a \textbf{conservation of probability mass}, with
$\text{sim}(\latentrepcomp{l},\prototypej{j})\in [0,1]$, and  $\sum_{j \in \prototypeindices}\text{sim}(\latentrepcomp{l},\prototypej{j}) = 1$. 
 It follows that $\actveci{j}= \max_{l}\ \text{sim}(\latentrepcomp{l},\prototypej{j}) \in [0,1]$ and $\actvec^-=0$ (used for Top-$k$ explanations).
In this paradigm, adding certain $\langle$patch,prototype$\rangle$ pairs to the ALE significantly constrains the remaining ones:
\begin{equation}
\simub(\latentrepcomp{l},\prototypej{j}) =
1-\sum_{\langle\latentrepcomp{l},\prototypej{k}\rangle \in \explanation} \text{sim}(\latentrepcomp{l},\prototypej{k})
\end{equation}

In other words, by adding $\langle\latentrepcomp{l},\mathbf{p}_{j}\rangle$ to the explanation $\explanation$, we ``consume'' a portion of the softmax probability for this specific patch. The absolute upper bound for any prototype outside the explanation across the entire image becomes strictly limited by the maximum remaining residual mass:
\begin{equation}
    \maxsimval{k}
    \le \max_{l} \left( 1 - \sum_{\langle\latentrepcomp{l},\prototypej{j}\rangle \in \explanation} \text{sim}(\latentrepcomp{l},\prototypej{j}) \right)
\end{equation}


\subsubsection{Sparse-Weight Explanation}

Traditional ProtoPNets have dense classification heads, \textit{i.e.} every prototype slightly influences every class. On the contrary, PIP-Net's decision head is incentivized to be a very sparse matrix, with non-negative weights ($w_{j,c} \ge 0$).
In this case, we start by initializing the explanation $\explanation$ with the set of all prototypes that have a non-zero weight for the predicted class $\hat{c}$:

\begin{equation}
\explanation_0 = \{ j \mid w_{j,\hat{c}} > 0 \}
\end{equation}
Because all other prototypes have a weight of $0$ for $\hat{c}$, the prototypes inside $\explanation_0$ give us the exact score for the predicted class. And for any explanation $\explanation \supseteq \explanation_0$:
\begin{equation}
 o_{\hat{c}} = \sum_{j\in \prototypeindices} w_{j,\hat{c}} \cdot \mathbf{a}_j = \sum_{j \in \explanation} w_{j,\hat{c}} \cdot \mathbf{a}_j = \sum_{j \in \explanation_0} w_{j,\hat{c}} \cdot \mathbf{a}_j
\end{equation}

Since maximum activations are globally bounded by $1.0$ (due to the softmax), the output value $o_{c'}$ for any competing class $c'\neq \hat{c}$ is bounded by:
\begin{equation}
o_{c'} \leq \overline{o}_{c'} = \sum_{j \in \explanation} w_{j,c'} \cdot \mathbf{a}_j + \sum_{j' \notin \explanation} w_{j',c'} \cdot 1.0
\end{equation}
If $\overline{o}_{c'} < o_{\hat{c}}$ for all competing classes $c'\neq \hat{c}$, then the prediction is formally guaranteed. Otherwise, for $c^*=\argmax_{c'\neq \hat{c}}\overline{o}_{c'}$, we add to $\explanation$ the activation score that contributes the most to reducing $\overline{o}_{c^*}$, \textit{i.e.}
\begin{equation}
    \explanation \gets \explanation \cup \argmax_{j \notin \explanation} (1-\mathbf{a}_j)w_{j,c^*}
\end{equation}

\subsection{Isotropic Gaussian Similarity}
Gaussian networks replace the distance $d(\latentrep, \prototypej{j})$ with a negative log-likelihood $-\log P(\latentrep | \prototypej{j})$.
For prototype $\prototypej{j} \sim \mathcal{N}(\mu_j, \Sigma_j)$, if we assume an isotropic covariance $\Sigma_j = \sigma_j^2 \text{I}$ for each prototype, we can recover the ``true'' Euclidean distance between the $l$-th latent patch $\latentrepcomp{l}$ and the $j$-th prototype $\prototypej{j}$ and even express it on the metric space defined by the $k$-th prototype $\prototypej{k}$. Concretely, $\forall j\in\prototypeindices$:

\begin{equation}
    d_{\text{Euclid}}(x,y) \,=\, \sigma_j\, d_{\Sigma_j}(x,y)
\end{equation}

Because the transformation from a metric distance to a similarity score is strictly monotonically decreasing, our bounding logic is entirely agnostic to the specific choice of similarity function. Whether the model uses a Gaussian probability density $S(d) = -\frac{1}{2}d^2 - \log Z$, or a heavy-tailed distribution like the legacy ProtoPNet $S(d) = \log\big(\frac{d^2+1}{d^2+\epsilon}\big)$, we can construct formal bounds by mapping the problem into a universal Euclidean space. 

This provides a powerful geometric bridge: it allows us to map similarities into Euclidean radii, natively apply the Hypersphere Intersection Approximation (HIA)~\cite{soria2026formal}, and then remap the tight geometric bounds back into the model's specific activation space (see Figure~\ref{fig:prob_hia}). The end-to-end bounding pipeline proceeds as follows:

\paragraph{Inverse Mapping to Metric Space.} Given sim($\latentrepcomp{l},\prototypej{j}$), 
we apply the inverse similarity function $S^{-1}(\cdot)$ to extract the target-metric distance $d_{\Sigma_j}$, and scale it to recover the exact Euclidean distance\footnote{Note: For standard unscaled models like ProtoPNet, $\forall j.\,\sigma_j = 1$.}:
\begin{equation}
    d_{\text{Euclid}}(\latentrepcomp{l}, \mu_j) = \sigma_j \, S^{-1}\left(\text{sim}(\latentrepcomp{l},\prototypej{j})\right)
\end{equation}

\paragraph{Hypersphere Intersection Approximation.} Because the Gaussian latent space is Euclidean, the observed distance $d_{\text{Euclid}}(\latentrepcomp{l}, \mu_j)$ defines the exact radius of a standard Euclidean hypersphere centered at the prototype's anchor $\mu_j$. The true latent patch $\latentrepcomp{l}$ must lie exactly on the intersection of the surfaces of these hyperspheres. We iteratively apply the HIA algorithm to this sequence of observed hyperspheres. This geometric operation computes a new bounding Euclidean hypersphere, defined by a center $\mathbf{c}_{int}$ and a minimal enclosing radius $r_{int}$ that strictly encapsulates the exact intersection (see Figure~\ref{fig:prob_hia}).

\paragraph{Target-Metric Projection.} To bound the distance to a prototype $\prototypej{k}\notin \explanation$, we measure the Euclidean distance between our newly computed bounding center $\mathbf{c}_{int}$ and that prototype, and scale it back into its metric space. The distance $d_{\Sigma_k}(\latentrepcomp{l}, \prototypej{k})$ is bounded by:
\begin{align}
    \quad & \overline{d}_{\Sigma_k}(\latentrepcomp{l}, \prototypej{k}) \,=\, \frac{1}{\sigma_k} \Big( \|\mathbf{c}_{int} - \mu_k\|_2 + r_{int} \Big) \\
    \quad & \underline{d}_{\Sigma_k}(\latentrepcomp{l}, \prototypej{k}) \,=\, \frac{1}{\sigma_k} \max\Big(0, \; \|\mathbf{c}_{int} - \mu_k\|_2 - r_{int} \Big)
\end{align}

\paragraph{Forward Mapping to Activations.} Finally, because any valid similarity function $S(d)$ must be monotonically decreasing with respect to distance, we strictly invert the geometric bounds. The maximum possible distance yields the lower bound on similarity, and the minimum possible distance yields the upper bound:
\begin{equation}
    \simlb(\latentrepcomp{l},\prototypej{k}) = S\big(\overline{d}_{\Sigma_k}\big) 
    \quad
    \textnormal{and}
    \quad
    \simub(\latentrepcomp{l},\prototypej{k}) = S\big(\underline{d}_{\Sigma_k}\big)
\end{equation}
This pipeline completely decouples the geometric intersection logic from the model's forward pass, allowing subset-minimal explanations to be computed rapidly for a wide variety of prototype architectures.

\begin{figure}
    \centering
\begin{tikzpicture}[scale=0.27, line join=round, line cap=round]

  \fill[blue, opacity=0.32] (-5.458,5.292) circle (3.970);
  \fill[YellowOrange, opacity=0.32] (4.500,8.000) circle (2.007);
  \fill[purple, opacity=0.32] (-2.058,-6.058) circle (1.200);
  \fill[blue, opacity=0.24] (-5.458,5.292) circle (7.939);
  \fill[YellowOrange, opacity=0.24] (4.500,8.000) circle (4.015);
  \fill[purple, opacity=0.24] (-2.058,-6.058) circle (2.400);
  \fill[blue, opacity=0.16] (-5.458,5.292) circle (11.909);
  \fill[YellowOrange, opacity=0.16] (4.500,8.000) circle (6.022);
  \fill[purple, opacity=0.16] (-2.058,-6.058) circle (3.600);

  \draw[blue, very thick] (-5.458,5.292) circle (7.964);
  \draw[YellowOrange, very thick] (4.500,8.000) circle (6.042);
  \draw[OrangeRed, very thick, dashed] (0.779,6.988) circle (4.651);

  \draw[blue, dashed] (-5.458,5.292) -- (-11.090,10.923);
  \draw[YellowOrange, dashed] (4.500,8.000) -- (8.772,12.272);
  \draw[OrangeRed, ultra thick] (-2.058,-6.058) -- (-0.209,2.443);
  \draw[darkgray, thick, dotted] (-2.058,-6.058) -- (1.768,11.533);

  \node[circle, fill=blue, inner sep=2pt] at (-5.458,5.292) {};
  \node[circle, fill=YellowOrange, inner sep=2pt] at (4.500,8.000) {};
  \node[circle, fill=purple, inner sep=2pt] at (-2.058,-6.058) {};
  \node[circle, fill=black, inner sep=2pt] at (2.000,2.500) {};
  \draw[OrangeRed, thick] (0.479,6.688) -- (1.079,7.288);
  \draw[OrangeRed, thick] (0.479,7.288) -- (1.079,6.688);

  \node[blue, below] at (-5.500,4.800) {$\boldsymbol{\mu_1}$};
  \node[YellowOrange!50!black, below] at (4.400,7.700) {$\boldsymbol{\mu_2}$};
  \node[purple, below] at (-2.058,-6.858) {$\boldsymbol{\mu_k}$};
  \node[OrangeRed, above] at (2.500, 0.500) {$\mathbf{z}_l$};
  \node[OrangeRed, below] at (-0.179,10.388) {$\mathbf{C_{int}}$};
\end{tikzpicture}
    \caption{Gaussian Hypersphere Intersection Approximation. Each prototype $\prototypej{j}$ is defined by its anchor $\mu_j$ and its variance $\sigma_j$ (represented as a layer of the same color). In this figure, $\latentrepcomp{l}$ is at distance $2$ to \textcolor{blue}{$\boldsymbol{\mu_1}$} and distance $3$ to \textcolor{orange!75!black}{$\boldsymbol{\mu_2}$}. The red dotted circle encompasses the hypersphere that approximates the intersection between the two hyperspheres in the Euclidean metric space.}
    \label{fig:prob_hia}
\end{figure}

\subsection{Focal Similarity}
ProtoPool \cite{rymarczyk2021interpretable} employs a ``Focal Similarity'' to suppress background activations and encourage prototypes to be localized. Technically, this Focal Similarity can be used in tandem with any predefined (or novel) similarity function, as it only changes the pooling before the final prototypes' activations. It effectively replaces the \texttt{maxpooling} ($\actveci{j} = \max_{l \in \latentindices}\, \text{sim}(\latentrepcomp{l}, \prototypej{j})$) used in other architectures with a relative \texttt{focal pooling} ($\actveci{j} = \max_{l \in \latentindices}\, \text{sim}(\latentrepcomp{l}, \prototypej{j}) - \underset{l \in \latentindices}{\mathbb{E}} [\text{sim}(\latentrepcomp{l}, \prototypej{j})]$) taking into account (and minimizing) the average activation of the prototype latent representation.


To bound this within the ALE framework, we leverage the fact that spatial ALEs already provide formal similarity bounds $[\simlb(\latentrepcomp{l}, \prototypej{j}), \simub(\latentrepcomp{l}, \prototypej{j})]$ for \emph{every} latent patch $l \in \latentindices$.
Consequently, we can derive strict bounds for the aggregate statistics required by focal pooling. For a given prototype $\prototypej{j}$, the maximum and expected patch similarities are bounded by:
\begin{align}
    \max_{l \in \latentindices} \text{sim}(\latentrepcomp{l}, \prototypej{j}) &\in \left[ \max_{l \in \latentindices} \simlb(\latentrepcomp{l}, \prototypej{j}), \max_{l \in \latentindices} \simub(\latentrepcomp{l}, \prototypej{j}) \right] \\
    \underset{l \in \latentindices}{\mathbb{E}} \left[\text{sim}(\latentrepcomp{l}, \prototypej{j})\right] &\in \left[ \underset{l \in \latentindices}{\mathbb{E}} [\simlb(\latentrepcomp{l}, \prototypej{j})], \underset{l \in \latentindices}{\mathbb{E}} \left[\simub(\latentrepcomp{l}, \prototypej{j})\right] \right]
\end{align}

Because the focal activation $\actveci{j}$ is defined as the difference between these two terms, its extreme values occur at opposing bounds. We obtain the final, formally guaranteed activation bounds by cross-subtracting the extrema:
\begin{align}
    \minsimval{\explanation, j} &= \max_{l \in \latentindices} \simlb(\latentrepcomp{l}, \prototypej{j}) - \underset{l \in \latentindices}{\mathbb{E}} \left[\simub(\latentrepcomp{l}, \prototypej{j})\right] \\
    \maxsimval{\explanation, j} &= \max_{l \in \latentindices} \simub(\latentrepcomp{l}, \prototypej{j}) - \underset{l \in \latentindices}{\mathbb{E}} \left[\simlb(\latentrepcomp{l}, \prototypej{j})\right]
\end{align}

This formulation seamlessly extends spatial ALEs to support ProtoPool. By aggregating the existing patch-wise similarity bounds, it handles focal similarity penalties without requiring any changes to the underlying geometric solver.

\section{Experiments}
\label{sec:experiments}

To empirically validate our framework, we implement the aforementioned spatial reasoning techniques and extract subset-minimal explanations that yield rigid mathematical guarantees on model predictions. We trained four distinct architectures\footnote{We also trained ProtoPool on CUB200 but we did not succeed at training this architecture on the other two datasets.}---a ProtoPNet, a PIP-Net, a TesNet, and an Isotropic Gaussian ProtoPNet---on three fine-grained image classification datasets: Oxford Flowers 102~\cite{Nilsback08}, Oxford IIIT Pet~\cite{parkhi12a}, and CUB200~\cite{wah2011caltech}. All experiments were executed using the CaBRNet\footnote{\url{https://github.com/aiser-team/cabrnet}} library~\cite{xu2024cabrnet}.

For each formal explanation method, we report three key metrics: absolute size, relative size, and computing time. We prioritize these metrics because the time required to compute an explanation is a critical operational bottleneck for practitioners selecting XAI methods, and the size of the explanation is intrinsically tied to human cognitive limits and interpretability~\cite{miller1956magical,fox2024cognitive}, and had been argued to be a factor of acceptance for humans~\cite{miller2019explanation,nauta2023anecdotal}. 

To provide a fair comparison across fundamentally different ALE paradigms, we introduce the Relative Size metric. This normalizes the absolute size of an explanation by the maximum possible theoretical size for that specific paradigm:
\begin{itemize}
    \item \textbf{Global Paradigms} (Top-$k$, Sparse-Weight): These methods rely on raw prototype activations. Their maximum size is strictly bounded by the total number of prototypes in the network, $|\prototypeindices|$ 
    \item \textbf{Spatial Paradigms} (Simplex, TI, HIA): These methods evaluate prototype similarities across the entire latent grid. Their minimum size is $|\latentindices|$, and their maximum size scales with the feature map dimensions, $|\latentindices| \times |\prototypeindices|$.
\end{itemize}

\paragraph{Training Details.}
The total number of training epochs ranged from 30 epochs to 100 depending on the architecture and the dataset. A warm-up phase is used at first to train the prototype layer but not the ``backbone'' i.e. our feature extractor. A projection phase interrupts the main training periodically to push the prototypes to latent patches of the training dataset images.

\paragraph{Results.} 
The performance of the diverse ALE methods across both datasets is reported in Table~\ref{tab:comparison_oxford_flowers} and Table~\ref{tab:comparison_oxford_pets} (best result across ALEs for a specific architecture in \textbf{bold}, overall \underline{underlined}). The empirical results confirm that our generalized spatial bounding techniques successfully compress the relative hypothesis space, though often at the cost of computational overhead. We rigorously dissect the architectural inductive biases driving these trade-offs in Section~\ref{sec:discussion}.

\begin{table}[ht!]
\centering
\caption{Comparison of explanation size and computing time on Oxford Flowers 102.}
\label{tab:comparison_oxford_flowers}
\begin{tabular}{l c l ccc}
\hline\noalign{\smallskip}
Architecture & Acc. (\%) & ALE & Abs. Size & Rel. Size (\%) & Time (s) \\
\noalign{\smallskip}\hline\noalign{\smallskip}
ProtoPNet & $82.8$
 & Top-$k$ & $\mathbf{199} \pm 254$ & $19.5 \pm 24.9$ & $\mathbf{0.327} \pm 0.406$ \\
 & & TI & $5273 \pm 6075$ &  $32.3 \pm 37.2$ & $60.434 \pm 61.289$ \\
 & & HIA & $675 \pm 361$ & $\mathbf{4.1} \pm 2.2$ & $11.781 \pm 13.676$ \\

\noalign{\smallskip}\hline\noalign{\smallskip}
PIP-Net & $80.5$
 & Top-$k$ & $135 \pm 29$ & $17.6 \pm 3.8$ & $1.229 \pm 1.413$ \\
 & & Sparse & $\underline{\mathbf{103}} \pm 55$ & $13.5 \pm 7.2$ & $\underline{\mathbf{0.075}} \pm 0.089$ \\
 & & Simplex & $250 \pm 103$ & $\underline{\mathbf{0.2}} \pm 0.1$ & $0.286 \pm 0.069$ \\
 
\noalign{\smallskip}\hline\noalign{\smallskip}
TesNet & $81.9$
 & Top-$k$ & $295 \pm 166$ & $28.9 \pm 16.3$ & $\mathbf{0.341} \pm 0.224$ \\
 & & Cosine TI & $10438 \pm 2862$ & $64.0 \pm 17.5$ & $202.580 \pm 64.435$ \\
 & & Spherical HIA & $\mathbf{187} \pm 51$ & $\mathbf{1.1} \pm 0.3$ & $4.491 \pm 2.827$ \\
\noalign{\smallskip}\hline\noalign{\smallskip}
Gaussian & $72.8$
 & Top-$k$ & $\mathbf{257} \pm 237$ & $25.2 \pm 23.2$ & $\mathbf{0.651} \pm 0.563$ \\
 & & Scaled TI & $1022 \pm 2035$ & $6.3 \pm 12.5$ & $20.490 \pm 44.649$ \\
 & & Scaled HIA & $931 \pm 1770$ & $\mathbf{5.7} \pm 10.8$ & $157.373 \pm 312.566$ \\
\end{tabular}
\end{table}

\begin{table}[ht!]
\centering
\caption{Comparison of explanation size and computing time on Oxford IIIT Pets.}
\label{tab:comparison_oxford_pets}
\begin{tabular}{l c l ccc}
\hline\noalign{\smallskip}
Architecture & Acc. (\%) & ALE & Abs. Size & Rel. Size (\%) & Time (s) \\
\noalign{\smallskip}\hline\noalign{\smallskip}
ProtoPNet & $89.3$
 &   Top-$k$ & $\mathbf{99} \pm 72$ & $26.6 \pm 19.4$ & $\mathbf{0.101} \pm 0.06$ \\
 & & TI & $7869 \pm 6537$ & $43.4 \pm 36.1$ & $105.805 \pm 83.69$ \\
 & & HIA & $733 \pm 204$ & $\mathbf{4.0} \pm 1.1$ & $3.098 \pm 1.49$ \\

\noalign{\smallskip}\hline\noalign{\smallskip}
PIP-Net & $88.7$
 &   Top-$k$ & $131 \pm 13$ & $17.0 \pm 1.7$ & $1.586 \pm 0.231$ \\
 & & Sparse & $\mathbf{30} \pm 16$ & $4.0 \pm 2.0$ & $\mathbf{0.076} \pm 0.038$ \\
 & & Simplex & $190 \pm 30$ & $\underline{\mathbf{0.1}} \pm 0.0$ & $0.252 \pm 0.040$ \\
 
\noalign{\smallskip}\hline\noalign{\smallskip}
TesNet & $93.2$
 &   Top-$k$ & $\underline{\mathbf{16}} \pm 18$ & $4.4 \pm 4.8$ & $\underline{\mathbf{0.027}} \pm 0.011$ \\
 & & Cosine TI & $1750 \pm 2149$ & $9.7 \pm 11.9$ & $26.565 \pm 41.140$ \\
 & & Spherical HIA & $578 \pm 126$ & $\mathbf{3.2} \pm 0.7$ & $68.798 \pm 20.136$ \\
 
\noalign{\smallskip}\hline\noalign{\smallskip}
Gaussian & $87.5$ 
 & Top-$k$ & $\mathbf{63} \pm 34$ & $17.1 \pm 9.2$ & $\mathbf{0.156} \pm 0.074$ \\
 & & Scaled HIA & $1126 \pm 1327$ & $\mathbf{6.2} \pm 7.3$ & $142.364 \pm 254.248$ \\
 & & Scaled TI & $1215 \pm 1407$ & $6.7 \pm 7.8$ & $14.256 \pm 35.297$ \\
\end{tabular}
\end{table}

\begin{table}[ht!]
\centering
\caption{Comparison of explanation size and computing time on CUB200.}
\label{tab:comparison_cub200}
\begin{tabular}{l c l ccc}
\hline\noalign{\smallskip}
Architecture & Acc. (\%) & ALE & Abs. Size & Rel. Size (\%) & Time (s) \\
\noalign{\smallskip}\hline\noalign{\smallskip}
ProtoPNet & $83.9$ 
 & Top-$k$ & $\mathbf{314} \pm 218$ & $15.7 \pm 10.9$ & $\mathbf{0.997} \pm 0.732$ \\
 & & TI & $1950 \pm 5860$ & $2.0 \pm 6.0$ & $213.616 \pm 666.449$ \\
 & & HIA & $1349 \pm 3769$ & $\mathbf{1.4} \pm 3.8$ & $2446.927 \pm 11363.445$ \\

\noalign{\smallskip}\hline\noalign{\smallskip}
ProtoPool & $82.5$ 
 & Top-$k$ & $\mathbf{67} \pm 44$ & $30.5 \pm 20.3$ & $\underline{\mathbf{0.195}} \pm 0.111$ \\
 & & TI & $893 \pm 522$ & $45.5 \pm 26.6$ & $7.502 \pm 3.505$ \\
 & & HIA & $272 \pm 315$ & $\mathbf{13.8} \pm 16.1$ & $7.666 \pm 12.973$ \\

\noalign{\smallskip}\hline\noalign{\smallskip}
PIP-Net & $82.7$
 &   Top-$k$ & $\underline{\mathbf{18}} \pm 38$ & $2.3 \pm 4.9$ & $\mathbf{1.031} \pm 0.910$ \\
 & & Sparse & $216 \pm 111$ & $28.1 \pm 14.4$ & $1.064 \pm 1.025$  \\
 & & Simplex & $955 \pm 339$ & $\underline{\mathbf{0.2}} \pm 0.1$ & $50.427 \pm 28.036$ \\

\noalign{\smallskip}\hline\noalign{\smallskip}
TesNet & $86.8$ 
 &   Top-$k$ & $58 \pm 156$ & $2.9 \pm 7.8$ & $2.078 \pm 1.947$ \\
 & & Cosine TI & & & \texttt{Timeout} \\
 & & Spherical HIA & & & \texttt{Timeout} \\

\noalign{\smallskip}\hline\noalign{\smallskip}
Gaussian & $81.4$ 
 &   Top-$k$ &  $95 \pm 211$ & $4.7 \pm 10.5$ & $0.340 \pm 0.510$ \\
 & & Scaled TI & & & \texttt{Timeout} \\
 & & Scaled HIA & & & \texttt{Timeout} \\

\end{tabular}
\end{table}

\section{Discussion}\label{sec:discussion}

By generalizing the ALE framework, we provide the first quantitative comparison of formal interpretability across diverse PBNs. 

PIP-Net yields the smallest absolute explanation sizes across all datasets (averaging just 18 to 103 prototypes) and boasts the fastest computation times ($\sim$0.07s). On CUB-200 in particular, its Top-$k$ ALE achieves the smallest absolute size across all architectures and datasets (18 $\pm$ 38). This establishes a clear architectural directive: models designed with non-negative, sparse linear heads are inherently more compatible with formal verification. Furthermore, applying the \textit{Simplex} paradigm yields the smallest \emph{relative} explanation size overall (0.1\%--0.2\%). This extreme compression hints that Simplex explanations scale well with the latent space dimensions.

In Gaussian models, the \textit{Scaled HIA} consistently provides the smallest relative explanation sizes (4.6\%--6.2\% on Oxford datasets), demonstrating that mapping probabilistic similarities into an intersection of Euclidean hyperspheres creates tightly bounded, subset-minimal explanations. However, this geometric precision comes at the cost of the longest computation times (up to 142s). On CUB-200, both Scaled TI and Scaled HIA exceed the time budget entirely, underscoring a fundamental scalability limitation of these paradigms as the number of classes grows.
Conversely, models operating on spherical manifolds via cosine similarity (TesNet) exhibit high variance. While the \textit{Spherical HIA} paradigm collapses the bounds -- achieving the smallest absolute size on Oxford Flowers (187) and highly competitive relative sizes on Oxford Pets (3.2\%) -- both Cosine TI and Spherical HIA time out on CUB-200, mirroring the Gaussian scalability failure and suggesting that the underlying cosine similarity inherently complicates spatial reasoning, requiring careful optimization to balance bound tightness and runtime.

While our extension of the ALE framework standardizes the mathematical guarantees, raw explanation sizes must be compared across architectures with careful consideration of structural confounders like the latent space dimensions.
Furthermore, we evaluate these architectures in their raw, fully parameterized states. Prototype pruning (or merging)---a common post-training step to reduce $|\prototypeindices|$---is orthogonal to our bounding algorithms but would mechanically reduce the absolute sizes of all ALEs. Therefore, when comparing different models, the \emph{Relative Size} metric provides a robust indicator of inherent architectural interpretability than absolute counts alone.

\section{Conclusion}
\label{sec:conclusion}
In this work, we have extended the ALE definitions to cover the rich landscape of modern prototype-based networks. Our experiments have allowed us to perform a quantitative comparison of models and highlighted design principles leading to a better formal interpretability.

\paragraph{Acknowledgment.}This publication was made possible by the use of the FactoryIA supercomputer, financially supported by the Ile-De-France Regional Council. This work was supported by the SAIF project, funded by the “France 2030” government investment plan managed by the French National Research Agency, under the reference ANR-23-PEIA-0006.

\bibliographystyle{splncs04}
\bibliography{mybibliography}
\end{document}